\documentclass[letterpaper, 10pt, conference]{ieeeconf}

\makeatletter

\let\proof\@undefined
\let\endproof\@undefined
\makeatother

\IEEEoverridecommandlockouts  

\usepackage{graphicx}
\usepackage{epstopdf}
\usepackage{lipsum}
\usepackage{amsthm}
\usepackage{amsfonts}
\usepackage{mathtools}  
\usepackage{bm}  
\usepackage[dvipsnames]{xcolor}  

\usepackage{paralist}  
\usepackage{tikz}
\usepackage{caption}  
\usepackage{subcaption}  
\captionsetup{font=footnotesize}
\usepackage{cite}
\usepackage{booktabs}  
\usepackage{siunitx}  
\usepackage[bookmarks]{hyperref}  
\hypersetup{
    colorlinks,
    citecolor=black,
    filecolor=black,
    linkcolor=black,
    urlcolor=black,
    pdfauthor={},
    pdfsubject={},
    pdftitle={}
}
\usepackage{xspace}  
\usepackage{citesort}
\usepackage{paralist}

\theoremstyle{definition}
\newtheorem{definition}{Definition}
\newtheorem{theorem}{Theorem}
\newtheorem{assumption}{Assumption}
\usepackage{soul}
\newcommand{\wrt}{w.r.t.\xspace}

\newcommand{\eg}{e.g.\xspace}

\usepackage{acro}
\DeclareAcronym{CBF}{
    short = CBF,
    long = Control Barrier Function
}
\DeclareAcronym{CLF}{
    short = CLF,
    long = Control Lyapunov Function
}
\DeclareAcronym{HJB}{
    short = HJB,
    long = Hamilton-Jacobi-Bellman
}
\DeclareAcronym{QP}{
    short = QP,
    long = Quadratic Programming
}
\DeclareAcronym{RL}{
    short = RL,
    long = Reinforcement Learning
}
\DeclareAcronym{SDRE}{
    short = SDRE,
    long = State Dependent Riccati Equation
}
\DeclareAcronym{MPC}{
    short = MPC,
    long = Model Predictive Control
}

\graphicspath{{imgs/}}

\title{\LARGE \bf Neural Control Barrier Functions for Safe Navigation}
\author{Marvin Harms, Mihir Kulkarni, Nikhil 
Khedekar, Martin Jacquet, Kostas Alexis
	\thanks{Autonomous Robots Lab, Norwegian University of Science and Technology (NTNU), Trondheim, Norway,
    {\tt \footnotesize
        \href{mailto:marvin.c.harms@ntnu.no}{marvin.c.harms@ntnu.no}}
    }
    \thanks{This work was partially supported by the European Commission Horizon Europe project DIGIFOREST (EC 101070405).}
}

\begin{document}
\maketitle

\begin{abstract}
    Autonomous robot navigation can be particularly demanding, especially when the surrounding environment is not known and safety of the robot is crucial.
    This work relates to the synthesis of Control Barrier Functions (CBFs) through data for safe navigation in unknown environments. A novel methodology to jointly learn CBFs and corresponding safe controllers, in simulation, inspired by the State Dependent Riccati Equation (SDRE) is proposed. The CBF is used to obtain admissible commands from any nominal, possibly unsafe controller. An approach to apply the CBF inside a safety filter without the need for a consistent map or position estimate is developed. Subsequently, the resulting reactive safety filter is deployed on a multirotor platform integrating a LiDAR sensor both in simulation and real-world experiments. 

\end{abstract}
\acresetall  

\section{Introduction}\label{sec:intro}

A host of robotics applications involve the deployment of safety-critical systems in unknown environments. Examples include ground and flying robots deployed for subterranean exploration, search-and-rescue, and forest monitoring~\cite{CerberusScience,valavanis2015handbook}.
Such deployments remain challenging, especially as onboard localization and mapping is prone to latency, noise and drift, possibly resulting in estimation failure and thus potentially collisions.
Responding to this fact, a set of methods have enabled data-driven approaches for collision-free flight without the need of a map. Indicative works relate to privileged learning from an expert policy~\cite{loquercio2021learning}, learned collision prediction with motion primitives~\cite{florence2020integrated,nguyen2023uncertainty} or Model Predictive Control~\cite{jacquet2024n}, and \ac{RL} using depth maps~\cite{kulkarni2024reinforcement}. However, these methods typically do not provide formal guarantees for collision avoidance. 

Another avenue for safe navigation, particularly focused on provable assurances, is the use of safety filters~\cite{wabersich2023data}, which are last resort policies processing controls from any nominal controller in order to render them safe \wrt given criteria such as collision avoidance.
To that end, predictive safety filters~\cite{WABERSICH2021109597} provide a solution for safe navigation, but at a large computational cost.
Alternatively, safety filters based on \acp{CBF} are computationally cheap and have recently received significant attention~\cite{ames2016control,ames2019control}.

\begin{figure}[h]
    \centering
    \includegraphics[width=0.98\columnwidth]{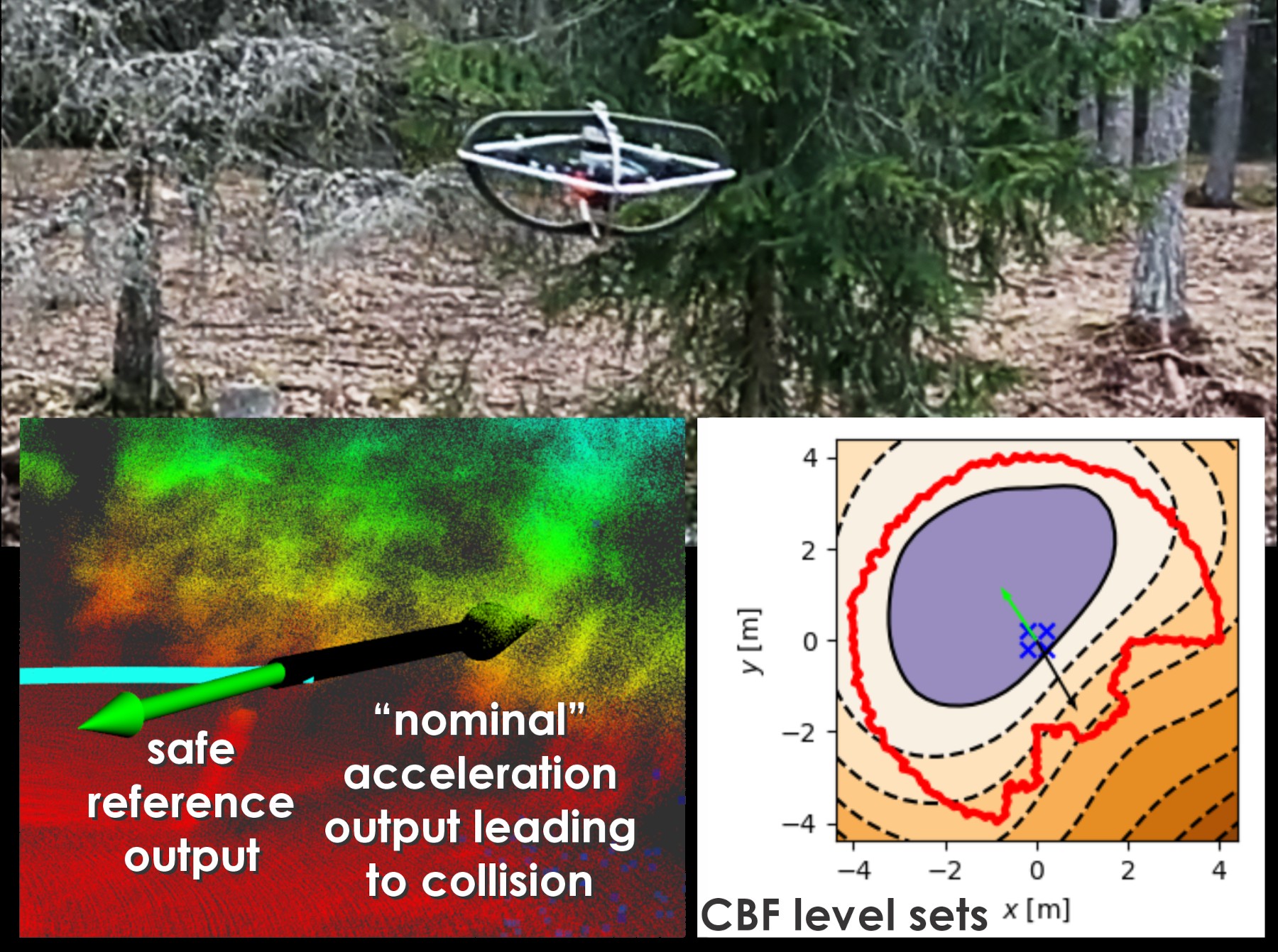} 
    \vspace{-1ex}
    \caption{Instance of the presented experiments for safe navigation through direct learning of neural control barrier functions commanding accelerations. Video: \url{https://www.youtube.com/watch?v=1id0C6jiFEg}}
    \label{fig:intro}
    \vspace{-4ex}
\end{figure}

Nevertheless, despite the recent advances~\cite{ferraguti2022safety,unlu2023control}, finding a permissive \ac{CBF} for constrained high-order systems remains an open challenge.
Machine learning has thus also been leveraged to learn \acp{CBF} for such systems from data~\cite{dawson2023safe,wabersich2023data}.
Such techniques allow to formalize a \ac{CBF} as safety \wrt collision avoidance in unknown environments~\cite{dawson2022learning,keyumarsi2023lidar}.
However, learning \acp{CBF} requires satisfying forward invariance conditions, which is particularly challenging.
Previous works either lifted this requirement by considering oversimplistic first-order systems~\cite{dawson2022learning}, or employed analytic \acp{CBF} which in turn struggle to scale to cluttered environments~\cite{keyumarsi2023lidar}.
Some approaches also necessarily employ some prior control policy, along which the \ac{CBF} is learned \cite{dawson2022safe}.

Motivated to overcome these limitations, in this work we propose a novel method for learning \acp{CBF} for safe navigation of aerial robots integrating range sensing. The contributions are threefold:
First, we introduce a novel formulation to jointly learn \acp{CBF} and safe controllers for collision avoidance in unknown environments for high-order systems with input constraints, using an adaptation of the \ac{SDRE}.
The proposed approach is trained by sampling the observation space of range sensors and the vehicle state space entirely in simulation.
Second, we propose a policy to reactively apply the \ac{CBF} in a safety filter without requiring approximation of the often unknown observation dynamics.
This policy operates in a local frame, thus relaxing the need for a consistent map.
Finally, the proposed safety filter is both statistically evaluated in simulations and experimentally verified on an autonomous quadrotor flying in novel environments (\eg, Fig.~\ref{fig:intro}).
The results demonstrate that the learned neural \acp{CBF} satisfy safety requirements, in terms of formulation guarantees and practical results, and enable practically safe autonomous navigation exploiting only instantaneous range observations and velocity estimates. 

In the remaining paper, Section~\ref{sec:related} presents related work, and~\ref{sec:preliminaries} preliminaries. The proposed neural \acp{CBF} are detailed in Section~\ref{sec:ncbf} and their use for navigation in Section~\ref{sec:quadrotor}. Evaluations are in Section~\ref{sec:xp} and conclusions in Section~\ref{sec:concl}.

\section{Related Work}\label{sec:related}
This contribution relates to the body of work in \acp{CBF}, their application to safe robot navigation, and \acp{CBF} learned from data.
\acp{CBF} were introduced to ensure safety for critical systems~\cite{ames2016control,ames2019control} and are becoming increasingly prevalent in robotics~\cite{ferraguti2022safety}.
Focusing on navigation, \cite{singletary2021comparative} compares artificial potential fields and \acp{CBF}. Using DBSCAN on LiDAR maps to cluster obstacles and thus derive state constraints, \cite{jian2023dynamic} utilizes \acp{CBF} to demonstrate safety for a ground robot. Considering known obstacles locations,~\cite{jang2024safe} develops safe control with multiple Lyapunov-based \acp{CBF}. The method in~\cite{unlu2023control} assumes a volumetric map combined with \acp{CBF} for safe aerial robot navigation demonstrated in simulation,
while~\cite{zhou2024control} enables safe teloperation experimentally. 


Focusing on the benefits of data-driven methods, learning-based approaches exploiting \ac{CBF} certificates have expanded over the recent years~\cite{dawson2023safe,wabersich2023data}.
Machine learning can enable the use of \acp{CBF} for complex systems for which deriving analytic safety certificates is challenging.
A first approach to mention is Safe \ac{RL},
where a \ac{CBF} is jointly learned during training to guide the policy to implicitly encode the underlying safety mechanism~\cite{cheng2019end}.
In~\cite{choi2020reinforcement}, \ac{RL} is proposed to learn the residuals in the dynamics of a \ac{CBF} (and a \ac{CLF}) with a neural network.
Aside \ac{RL}, supervised learning is also employed to derive safety filters based on \ac{QP}~\cite{taylor2020learning,dawson2022safe}.

Applying learning-based \acp{CBF} to safe navigation, the \ac{CBF} is used to encode safety \wrt the environment. The main challenge is to synthesize a \ac{CBF} based on partial observations of the environment obtained through exteroceptive sensors.
An approach is to extract features from the observations (\eg, the distance to the closest obstacle~\cite{song2022safe,keyumarsi2023lidar}) or geometric information about the obstacles in state space~\cite{xiao2023barriernet}.
However, these approaches struggle to scale to high-dimensional inputs (\eg, images, LiDAR scans) and therefore tend to not generalize well to unknown environments.
Leveraging deep learning to lift this limitation, \cite{abdi2023safe} uses a Generative Adversarial Network to detect the 3D positions of obstacles in images and infer their time derivatives used to compute a geometric \ac{CBF}.
In~\cite{tong2023enforcing} a NeRF predicts future images which will result from a given action, allowing to approximate the gradient of the \ac{CBF}. Yet, this is computationally demanding since the NeRF is queried for each sampled action.
Alternatively, the authors in~\cite{dawson2022learning} learn a \ac{CBF} within the observation space with the \ac{CBF} approximated by a neural network trained by sampling states and observations. However, this requires approximating the temporal gradients of observations which is particularly challenging in non-trivial environments.
Finally, \cite{srinivasan2020synthesis,long2021learning} learn a Signed Distance Function then used as a \ac{CBF}.

\section{Preliminaries}\label{sec:preliminaries}

This section outlines relevant background knowledge. 
\subsection{Symbols and Abbreviations}

\small
\begin{tabbing}
 \hspace*{2.2cm} \= \kill
  $\mathbf{x} \in \mathcal{X}$,~$\mathbf{u} \in \mathcal{U}$ \>  state $(\mathbf{x})$ and input $(\mathbf{u})$ vector \\[0.5ex] 
  $\mathbf{o} \in \mathcal{O}$ \>  observation vector \\[0.5ex] 
  $\mathcal{L}_f h, \mathcal{L}_B h \mathbf{u}$ \>  Lie derivatives of $h$ along $f$, $B \mathbf{u}$ \\[0.5ex] 
  $\frac{d}{dt} V$, $\nabla_\mathbf{x} V$ \>  time derivative of $V$, gradient of $V$ w.r.t. $\mathbf{x}$ \\[0.5ex] 
  $\mathbf{a} \cdot \mathbf{b}$ \>  dot product of $\mathbf{a}$ and $\mathbf{b}$ \\[0.5ex] 
  $\mathbb{I} $ \>  identity matrix \\[0.5ex] 
  $\|\cdot\|$ \>  euclidean norm \\[0.5ex] 
  $\text{eig}(\cdot)$ \>  eigenvalues \\[0.5ex] 
  $\text{proj}_{\mathcal{U}}(\cdot)$ \> projection onto $\mathcal{U}$ \\[0.5ex]
  $\text{ReLU}, \text{ELU}$ \>  rectified linear unit, exponential linear unit  \\[0.5ex] 
\end{tabbing}
\normalsize

\subsection{System Definition}
We consider the control affine system:

\small
\begin{equation}\label{system}
    \frac{d}{dt} \mathbf{x} = \underbrace{A(\mathbf{x})\mathbf{x}}_{f(\mathbf{x})} + B(\mathbf{x}) \mathbf{u}
\end{equation}
\normalsize
where $\mathbf{x} \in \mathcal{X} \subseteq \mathbb{R}^n$ and $\mathbf{u} \in \mathcal{U} \subseteq \mathbb{R}^m$. Further, we divide the state space $\mathcal{X}$ into the compact, open subset $\mathcal{X}_\text{free}$ and the closed subset $\mathcal{X}_\text{obs}$ such that $\mathcal{X} = \mathcal{X}_\text{free} \cup \mathcal{X}_\text{obs}$, where $\mathcal{X}_\text{obs}$ denotes the set of states we seek to avoid (the constraint set). Further we assume that the origin is safe, i.e. $\mathbf{x}=0 \in \mathcal{X}_\text{safe}$, where $\mathcal{X}_\text{safe} \subseteq \mathcal{X}_\text{free}$ is a \textit{control invariant} set.

\begin{definition}[Control Invariant Set]
    A set $\mathcal{X}_\text{safe}$ is a forward control invariant set for the system \eqref{system}, if for any $\mathbf{x}({t_0}) \in \mathcal{X}_\text{safe}$ there exists an input trajectory $\mathbf{u}(t) \in \mathcal{U}$ such that $\mathbf{x}({t}) \in \mathcal{X}_\text{safe} \quad \forall t\geq t_0$ under the system \eqref{system}.
\end{definition}
The use of control invariant sets is a central concept in safe control as it allows to bound areas in $\mathcal{X}$ where future constraint satisfaction (avoiding $\mathcal{X}_\text{obs}$) can be guaranteed.

\subsection{Control Barrier Functions}
Following the definition of a safe set, we now define corresponding control barrier functions which can later be used to synthesize safe control actions. 
\begin{definition}[Control Barrier Function \cite{ames2019control}]\label{def_CBF}
    Let $\mathcal{X}_\text{safe}$ be the $0$-superlevel set of a continuously differentiable function
    $h: \mathbb{R}^n \rightarrow \mathbb{R}$ with the property that $\mathcal{X}_\text{safe} = \{ \mathbf{x} \in \mathcal{X} | h(\mathbf{x})\geq 0 \}$ and $\{\mathbf{x} \in \mathcal{X} | \frac{dh}{d\mathbf{x}}(\mathbf{x}) = 0\} \cap \{\mathbf{x} \in \mathcal{X} | h(\mathbf{x}) = 0\} = \emptyset$. Then, $h$ is a control barrier function if there exists an extended class $\mathcal{K}_\infty$ function $\alpha(\cdot)$ such that for the system \eqref{system}:
    
    \small
    \begin{subequations}
        \begin{equation}\label{CBF_obs_condition}
            h(\mathbf{x})<0 \quad \forall \mathbf{x} \in \mathcal{X}_\text{obs}
        \end{equation}
        \begin{equation}\label{CBF_lie_condition}
            \underset{\mathbf{u} \in \mathcal{U}}{\text{sup}} [\mathcal{L}_f h(\mathbf{x}) + \mathcal{L}_B h(\mathbf{x}) \mathbf{u}] \geq -\alpha(h(\mathbf{x}))
        \end{equation}
    \end{subequations}
    \normalsize
    for all $\mathbf{x} \in \mathcal{X}_\text{safe}$. Further, an input $\mathbf{u}$ is considered safe with respect to a valid \ac{CBF}, if it satisfies \eqref{CBF_lie_condition}.
\end{definition}

By definition, a \ac{CBF} creates a safety certificate such that by control invariance, all state trajectories starting within $\mathcal{X}_\text{safe}$ can be kept inside $\mathcal{X}_\text{safe}$ for all future times by choosing appropriate input trajectories. Asymptotic stability of $\mathcal{X}_\text{safe}$ can  be achieved by extending requirement \eqref{CBF_lie_condition} to hold on $\mathcal{X}$.
Further, \acp{CBF} also provide the informative condition \eqref{CBF_lie_condition} on a candidate input at any given state, which can be utilized to synthesize safe control actions form unsafe ones. In \cite{ames2016control}, a simple yet effective way to employ condition \eqref{CBF_lie_condition} in a \ac{QP} to create a computationally cheap safety filter is proposed, yielding the safe input $\mathbf{u}^*$ from an arbitrary (possibly unsafe) reference input $\mathbf{u}_\text{ref}$:

\small
\begin{equation}\label{safety_QP}
    \begin{aligned}
    \mathbf{u}^*  = \quad & \underset{\mathbf{u} \in \mathcal{U}}{\text{argmin} } \quad ||\mathbf{u} - \mathbf{u}_\text{ref}||^2\\
        & \textrm{s.t.}  \quad \mathcal{L}_f h(\mathbf{x}) + \mathcal{L}_B h(\mathbf{x}) \mathbf{u} \geq -\alpha(h(\mathbf{x}))\\
    \end{aligned}
\end{equation}
\normalsize
While not being optimal in terms of closed-loop performance, the reactive policy \eqref{safety_QP} allows to synthesize safe control actions from unsafe ones. 
Therefore \acp{CBF} offer an efficient alternative to predictive safety filters \cite{WABERSICH2021109597}, especially for high-order systems with complex constraints, for which the computational complexity can become large.

\section{Neural Navigation CBFs}\label{sec:ncbf}

The above motivates the use of \acp{CBF} to navigation systems for autonomous robots. Aiming for a safety filter applied on a multirotor aerial robot with odometry state $\mathbf{x}$, action vector $\mathbf{u}$ subject to constraints $||\mathbf{u}|| \leq u_\text{max}$, and a local instantaneous observation of the environment $\mathbf{o}$ (e.g., a scan from an onboard LiDAR), we proceed with a generic formulation of the proposed neural navigation 
\acp{CBF} before specializing to the targeted application in Section~\ref{sec:quadrotor}. To that end, dynamics are considered to obey the form in~\eqref{system} as applicable for the case of translational dynamics of attitude-control multirotors. Accordingly, our goal is to enable collision-free navigation of the system \eqref{system} in arbitrary, unknown, static environments. Subsequently, we develop an approach to train and apply \acp{CBF} for safe navigation.

\subsection{Constructive Safety from Observation CBFs}
In contrast to the traditional formulation of \acp{CBF}, where $h$ is a function only of $\mathbf{x}$, we define the CBF as a function of both the exteroceptive observations $\mathbf{o} \in \mathcal{O} \subset \mathbb{R}^o$ and the state $\mathbf{x}$ as $h(\mathbf{o},\mathbf{x})$~\cite{dawson2022learning}. This change is necessary to enable safe navigation in an unknown environment, where the local observations from an onboard range sensor are used to create a \ac{CBF} describing a safe set in a local, robot-centered frame. This requires some additional care when formulating the invariance condition \eqref{CBF_lie_condition}. Specifically, we assume the following observation model, where our state evolves according to \eqref{system} and the observation $\mathbf{o}$ is an unknown function (since the environment is unknown) $\zeta$ of the state $\mathbf{x}$ as $\mathbf{o} = \zeta (\mathbf{x})$.
Taking the time derivative of $h(\mathbf{o},\mathbf{x})$ results:

\small
\begin{equation}
    \frac{d}{dt}h(\mathbf{o},\mathbf{x}) = \underbrace{\nabla_\mathbf{o} h(\mathbf{o},\mathbf{x}) \cdot \nabla_\mathbf{x} \zeta(\mathbf{x}) \frac{d}{dt}\mathbf{x}}_\text{observation dynamics} + \underbrace{\nabla_\mathbf{x} h(\mathbf{o},\mathbf{x}) \cdot \frac{d}{dt}\mathbf{x}}_\text{state dynamics} \text{,}
\end{equation}
\normalsize
where we grouped the terms into the observation dynamics and the state dynamics. Computing the time derivative $\frac{d}{dt}h(\mathbf{o},\mathbf{x})$ requires knowledge of the gradient $\nabla_\mathbf{x} \zeta$.
Conversely, neither is $\zeta$ known, nor does it necessarily have a computable gradient $\nabla_\mathbf{x} \zeta$ due to possible discontinuities in the environment. 
To enable safe navigation with observation \acp{CBF} with the safety filter QP \eqref{safety_QP}, we resort to a switching set approach to avoid approximating the observation dynamics. We treat successive observations $\mathbf{o}_{i-1}$, $\mathbf{o}_{i}$ as discrete-time parameters of the function $h$, describing separate safe sets defined in the respective local frames they were obtained in. Using the fact that the union of control invariant sets is again a control invariant set, it suffices to enforce invariance of the state $\mathbf{x}$ w.r.t. either valid \ac{CBF} $h(\mathbf{o}_{i-1}, \cdot)$ or $h(\mathbf{o}_{i}, \cdot)$. Note that $\mathbf{x}$ needs to be expressed relative to the frame the corresponding observation originates from by using the relative transform $\xi(\mathbf{x})$ (Fig. \ref{fig:switching_set}). Using a constructive approach, the switching update following a new observation $\mathbf{o}_{i}$ at time $t_i$ is proposed:

\small
\begin{equation}\label{switching_update}
    \begin{aligned}
        \mathbf{o}_\text{safe} &\leftarrow \mathbf{o}_{i} \quad \text{if } h(\mathbf{o}_{i},\xi_{i}(\mathbf{x}(t_i)))) \geq 0 \\
        \xi_\text{safe} &\leftarrow \xi_{i} \quad \text{if } h(\mathbf{o}_{i},\xi_{i}(\mathbf{x}(t_i)))) \geq 0 \\
        h(\mathbf{o}&,\mathbf{x}) = h(\mathbf{o}_\text{safe},\xi_\text{safe}(\mathbf{x}))
    \text{.}
    \end{aligned}
\end{equation}
\normalsize
The above update law essentially only updates the certificate parameters ($\mathbf{o}_\text{safe}$ and $\xi_\text{safe}$), if the state is in the resulting new safe set. Otherwise, the previous parameters are applied, where $\xi_\text{safe}(\mathbf{x})$ can be determined by dead-reckoning. 
The switching set approach is visualized in Fig. \ref{fig:switching_set}. 
\begin{figure}[t]
    \centering
    \includegraphics[width=0.9\columnwidth]{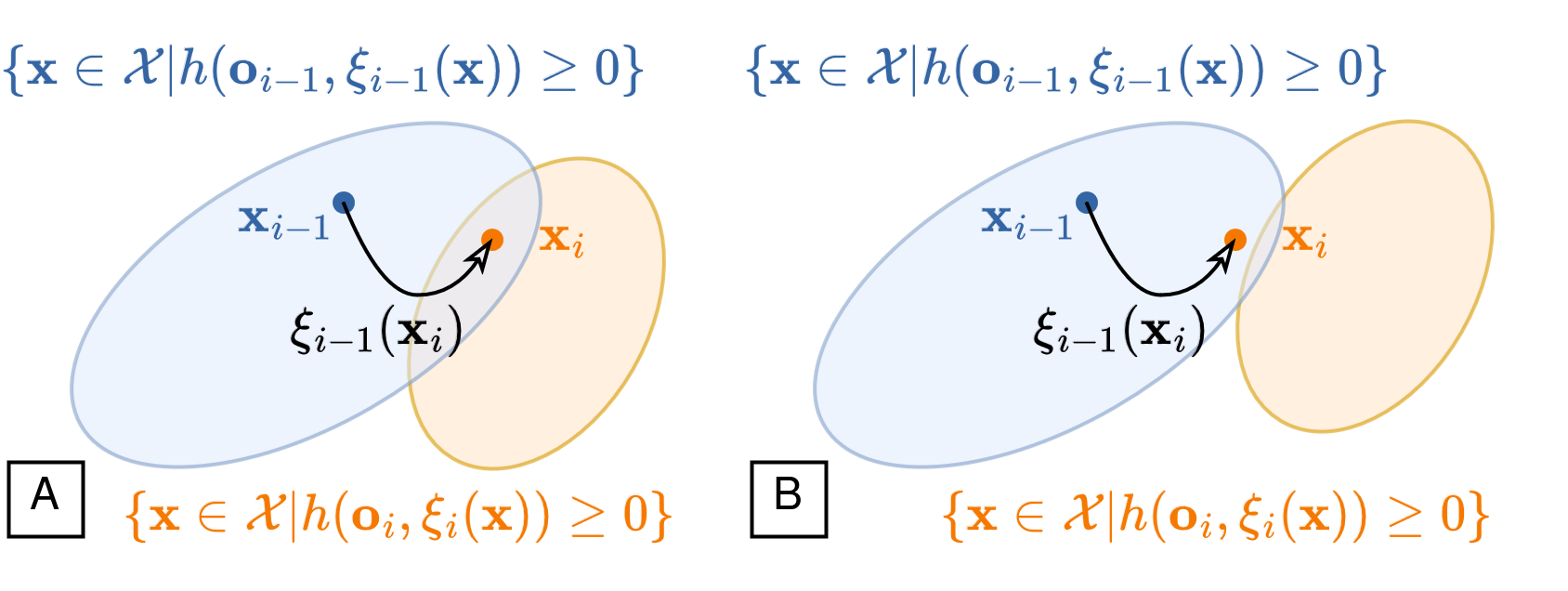}
    \vspace{-2.5ex}
    \caption{Fig. A (left): $\mathbf{x}_{i}$ is in the new safe set an the new certificate (using new latest observation) can be applied. Fig. B (right): $\mathbf{x}_{i}$ is not in the new safe set and invariance of a previous safe set is enforced.}
    \label{fig:switching_set}
    \vspace{-4ex}
\end{figure}
It is assumed that dead-reckoning occurs infrequently for short amounts of time to avoid accumulating odometry errors in the relative transform $\xi_\text{safe}$. Further, we avoid "locked-in" situations where $\{\mathbf{x} \in \mathcal{X} | h(\mathbf{o},\mathbf{x}) \geq 0 \} \cap \{\mathbf{x} \in \mathcal{X} | h(\mathbf{o}_{i},\xi_{i}(\mathbf{x}(t_i)))) \geq 0 \} = \emptyset$ by parametrization of $h$ introduced in subsection \ref{sec_SDRE}. We now show that forward invariance of this switching safe set can be achieved.
\begin{assumption}\label{assumption1}
    For an extended class $\mathcal{K_\infty}$ function $\alpha(\cdot)$, the function $h(\mathbf{o}_i,\cdot)$ is a valid \ac{CBF} for the system \eqref{system} according to Definition \ref{def_CBF} for any $\mathbf{o}_i \in \mathcal{O}$.
\end{assumption}
\begin{theorem}
    Under assumption \ref{assumption1}, the switching update \eqref{switching_update} applied at discrete time steps $t_i \in \{t_1, t_2, ..., t_e\}, e \in \mathbb{N}$ with $t_i \leq t_{i+1}$ together with any control law satisfying: 
    \small
    \begin{equation}\label{observation_CBF_lie}
    \nabla_\mathbf{x} h(\mathbf{o}_\text{safe},\xi_\text{safe}(\mathbf{x})) \cdot \frac{d}{dt}\mathbf{x} \geq -\alpha(h(\mathbf{o}_\text{safe},\xi_\text{safe}(\mathbf{x})))
    \end{equation}
    \normalsize
    $\forall t \neq t_i$ with $\mathbf{o}_\text{safe}, \xi_\text{safe}$ from \eqref{switching_update} guarantees forward invariance of the switching safe set $\mathcal{X}_\text{safe}=\{\mathbf{x} \in \mathcal{X} | h(\mathbf{o}_\text{safe},\xi_\text{safe}(\mathbf{x})) \geq 0 \}$. That is $\mathbf{x}(t_0) \in \mathcal{X}_\text{safe}$ implies $\mathbf{x}(t) \in \mathcal{X}_\text{safe} \quad \forall t\geq t_0$. 
\end{theorem}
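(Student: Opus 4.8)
The plan is to prove forward invariance by induction over the finite sequence of switching instants $t_1,\dots,t_e$, reducing the problem on each inter-switch interval to the classical control barrier function invariance argument and then verifying that the discrete updates \eqref{switching_update} never force $h$ below zero. The crucial observation is that between two consecutive switches the certificate parameters $\mathbf{o}_\text{safe}$ and $\xi_\text{safe}$ are held constant, so the composed map $\mathbf{x} \mapsto h(\mathbf{o}_\text{safe},\xi_\text{safe}(\mathbf{x}))$ is a genuine state-only function. Consequently its total time derivative loses the unknown observation-dynamics term and collapses to $\frac{d}{dt}h = \nabla_\mathbf{x} h(\mathbf{o}_\text{safe},\xi_\text{safe}(\mathbf{x})) \cdot \frac{d}{dt}\mathbf{x}$, which is exactly the quantity bounded in \eqref{observation_CBF_lie}. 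This is what lets us sidestep the unknown $\nabla_\mathbf{x}\zeta$ entirely.

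First, on any open interval $(t_i,t_{i+1})$ I would combine the frozen-parameter identity above with the control law's guarantee \eqref{observation_CBF_lie} to obtain the differential inequality $\frac{d}{dt}h(\mathbf{x}(t)) \geq -\alpha(h(\mathbf{x}(t)))$. Applying the comparison lemma against the scalar initial value problem $\dot{y} = -\alpha(y)$, and using that $\alpha$ is extended class $\mathcal{K}_\infty$ (hence $\alpha(0)=0$, making $y=0$ an equilibrium that a nonnegative solution cannot cross from above), I conclude that $h(\mathbf{x}(t_i)) \geq 0$ propagates to $h(\mathbf{x}(t)) \geq 0$ throughout the interval. By Assumption \ref{assumption1} the frozen function $h(\mathbf{o}_\text{safe},\cdot)$ is a valid CBF, so this invariance argument is legitimate on each interval.

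Second, I would handle each switching instant $t_i$ by a short case analysis driven by \eqref{switching_update}. If the latest observation satisfies $h(\mathbf{o}_i,\xi_i(\mathbf{x}(t_i))) \geq 0$, the update replaces the parameters, and by construction the post-switch value $h(\mathbf{o}_\text{safe},\xi_\text{safe}(\mathbf{x}(t_i)))$ equals this nonnegative quantity; if instead it is negative, the update retains the previous parameters, whose value is nonnegative by the interval argument just established. Either way $h$ remains in its $0$-superlevel set across the switch, so membership in $\mathcal{X}_\text{safe}$ is never lost at $t_i$. Chaining the base case $h(\mathbf{x}(t_0)) \geq 0$, the inter-switch propagation, and the switch-time preservation through an induction on $i$ then yields $h(\mathbf{x}(t)) \geq 0$, that is $\mathbf{x}(t) \in \mathcal{X}_\text{safe}$, for all $t \geq t_0$.

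The main obstacle I anticipate is not any single interval, which is standard, but rigorously justifying the piecewise stitching across the possibly many discontinuous jumps of $\mathbf{o}_\text{safe}$ and $\xi_\text{safe}$: the composed barrier value can jump at each $t_i$, so one must argue it never jumps downward through zero. This is precisely what the conditional structure of \eqref{switching_update} is engineered to prevent, and making that argument watertight, together with carefully invoking the union-of-control-invariant-sets property so that the effective safe set stays control invariant even as its parametrization changes, is where the reasoning must be most careful.
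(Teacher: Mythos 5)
Your proposal is correct and follows essentially the same route as the paper's proof: establish standard CBF forward invariance on each inter-switch interval where the certificate parameters are frozen (the paper cites the invariance theorem of Ames et al., where you instead unpack it via the comparison lemma), observe that the conditional structure of the switching update guarantees the state remains in the post-switch safe set at every $t_i$, and then stitch the intervals together recursively. The only cosmetic difference is that you spell out the comparison-lemma argument and the case analysis at switch times in more detail than the paper does; there is no gap.
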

\begin{proof}
First, we note that \eqref{observation_CBF_lie} guarantees forward invariance and asymptotic stability of the set $\mathcal{X}_\text{safe}$ for the time interval $[t_a, t_b]$ where $\forall i \quad t_i \notin [t_a, t_b]$ (\cite{ames2019control}, theorem 2). Further, the switching update \eqref{switching_update} applied at time step $t_i^-$ (before the switching update), where $\mathbf{x}(t_i^-) \in \mathcal{X}_\text{safe}$, always produces $\mathbf{o}_\text{safe},\xi_\text{safe}$ such that $\mathbf{x}(t_i^+) \in \mathcal{X}_\text{safe}$ at $t_i^+$ (after the switching update). By choosing $t_b = t_i^+$ and $\mathbf{x}(t_a) \in \mathcal{X}_\text{safe}$ we thus have $\mathbf{x}(t) \in \mathcal{X}_\text{safe} \forall t \in [t_a, t_b]$. By recursively applying this logic with time intervals $[t_0,t_1], (t_1,t_2],..., (t_{e-1},t_e], (t_e,\infty)$ with $t_0 \leq t_1$, we have  $\mathbf{x}(t) \in \mathcal{X}_\text{safe} \forall t \geq t_0$.
\end{proof}

The combined update law \eqref{switching_update}, \eqref{observation_CBF_lie} allows to consider newer and local \acp{CBF} during navigation tasks while forward invariance of the (switching) safe set is always maintained without approximation of $\nabla_\mathbf{x} \zeta$.

\subsection{Learning Observation CBFs} \label{sec:SDRE}

To learn \acp{CBF} for a general navigation task in unknown environments, the proposed method relies on sampling of the state space $\mathcal{X}$, and additional information on whether the sampled states lie within $\mathcal{X}_\text{obs}$ or $\mathcal{X}_\text{free}$. This label can be easily computed at low cost and does not require constraints to be evaluated in closed form.
Specifically, we are interested in learning the \ac{CBF} to synthesize safe control actions from arbitrary control policies without \textit{a priori} controllers and expert demonstrations, as is typically done in the literature. 
As in~\cite{dawson2022learning}, the safety controller is only learned to ensure the existence of a suitable control action during training and is discarded after training.
It thus serves as a ``plug 'n' play'' safety layer around any nominal navigation policy. Further, we assume that the origin $\mathbf{x}=0$ is part of the safe set $\mathcal{X}_\text{safe}$, i.e., that the origin can be rendered invariant by the parameterization discussed in subsection \ref{sec_SDRE}. 

Given these assumptions, the requirements posed on the learned \ac{CBF} candidate function $h(\mathbf{o},\mathbf{x})$ and safe control action $\mathbf{u}(\mathbf{o},\mathbf{x})$ are:

\small
\begin{subequations}\label{constraints}
    \begin{equation}\label{obstacle_contraint}
        h(\mathbf{o},\mathbf{x}) \leq 0 \quad \quad \quad \quad \quad \quad \quad \forall \mathbf{o} \in \mathcal{O}, \quad \forall \mathbf{x} \in \mathcal{X}_\text{obs}
    \end{equation}
    \begin{equation}\label{input_contraint}
        \mathbf{u}(\mathbf{o},\mathbf{x}) \in \mathcal{U} \quad \quad \quad \quad \quad \quad \quad \forall \mathbf{o} \in \mathcal{O}, \quad \forall \mathbf{x} \in \mathcal{X}
    \end{equation}
    \begin{equation}\label{lie_contraint}
        \frac{d}{dt} h(\mathbf{o},\mathbf{x}) \geq - \alpha(h(\mathbf{o},\mathbf{x})) \quad \forall \mathbf{o} \in \mathcal{O}, \quad \forall \mathbf{x} \in \mathcal{X} \text{.}
    \end{equation}
\end{subequations}
\normalsize

In the following, these requirements are now framed as a single multi-objective optimization problem, where we train the \ac{CBF} $h(\mathbf{o},\mathbf{x})$ and the safety controller $\mathbf{u}(\mathbf{o},\mathbf{x})$ parameterized as neural networks to satisfy these requirements.
The proposed network architecture is shown in Fig. \ref{fig:network}. In a series of networks, the observations $\mathbf{o}$ and states $\mathbf{x}$ are first passed through a latent network generating a joint latent variable $\mathbf{z}(\mathbf{o},\mathbf{x})$. This latent variable then gets passed through a controller network and a \ac{CBF} network to generate $\mathbf{u}(\mathbf{o},\mathbf{x})$ and $h(\mathbf{o},\mathbf{x})$, respectively. A static class $\mathcal{K}_{\infty}$ function $\alpha$ is then applied to $h(\mathbf{o},\mathbf{x})$ to evaluate condition \eqref{CBF_lie_condition}. During training, the network outputs are passed to a function enforcing requirements \eqref{constraints} and the network weights are updated using stochastic gradient descent. By using the latent representation $\mathbf{z}(\mathbf{o},\mathbf{x})$, the input to the \ac{CBF} network (e.g. $\mathbf{z}(\mathbf{o},\mathbf{x})$) is sensitive to penalties placed violation of \eqref{input_contraint} (in contrast to simply using the controller network and \ac{CBF} network directly on $(\mathbf{o},\mathbf{x})$), which improves the learning rate. The reason for this is that the input to the controller network and the \ac{CBF} network is updated by all loss functions.

\begin{figure*}[t]
    \centering
    \includegraphics[width=0.85\textwidth]{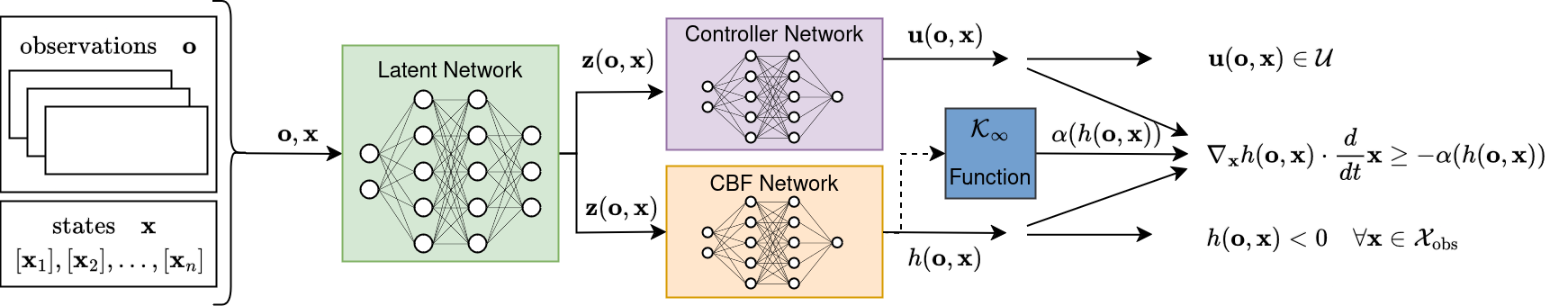}
    \vspace{-1ex}
    \caption{Proposed network architecture for jointly training a safety controller and \ac{CBF}. The observations $\mathbf{o}$ and states $\mathbf{x}$ are first passed through a latent network to generate a latent representation $\mathbf{z}(\mathbf{o},\mathbf{x})$, which is then passed through a controller network and a \ac{CBF} network to generate the control input $\mathbf{u}(\mathbf{o},\mathbf{x})$ and \ac{CBF} value $h(\mathbf{o},\mathbf{x})$, respectively. The \ac{CBF} value $h(\mathbf{o},\mathbf{x})$ is then passed through a (static) class $\mathcal{K}_{\infty}$ function $\alpha$. The method relies only on instantaneous observations $\mathbf{o}$ and not a map.}
    \label{fig:network}
    \vspace{-3ex}
\end{figure*}

\subsection{Joint Learning of Controller and \ac{CBF}} \label{sec_SDRE}

We now propose a formulation to jointly learn safe controllers and the corresponding \acp{CBF} using an adaptation of matrix-valued \ac{SDRE}. For the sake of legibility, we omit any dependence of $h$ on $\mathbf{o}$ in the following derivation, e.g. we replace $h(\mathbf{o},\mathbf{x})$ with $h(\mathbf{x})$.
Our starting point is the \ac{HJB} equation for a value function $V(\mathbf{x}) \geq 0$ and stage cost $l(\mathbf{x}, \mathbf{u})$:

\small
\begin{equation}\label{HJB}
    \frac{d}{dt} V(\mathbf{x}) + \underset{\mathbf{u} \in \mathcal{U}}{\text{sup}} [l(\mathbf{x}, \mathbf{u})] = 0.
\end{equation}
\normalsize
The \ac{SDRE} can now be derived as in \cite{cimen_SDRE}, assuming a structure of $V(\mathbf{x})$ as $V(\mathbf{x}) = \mathbf{x}^T P(\mathbf{x}) \mathbf{x}$ together with a stage cost $l(\mathbf{x}, \mathbf{u}) = \mathbf{x}^T Q(\mathbf{x}) \mathbf{x} + \mathbf{u}^T R(\mathbf{x}) \mathbf{u}$, for symmetric matrices $P(\mathbf{x})$, $Q(\mathbf{x})$, $R(\mathbf{x})$ as shown in  \cite{cimen_SDRE}. Inserting these into the \ac{HJB} equation yields:

\small
\begin{equation}\label{HJB_expanded}
    \nabla_\mathbf{x} V(\mathbf{x})^T \cdot  \frac{d}{dt} \mathbf{x} + \underset{\mathbf{u} \in \mathcal{U}}{\text{sup}} [\mathbf{x}^T Q(\mathbf{x}) \mathbf{x} + \mathbf{u}^T R(\mathbf{x}) \mathbf{u}] = 0.
\end{equation}
\normalsize
By inserting the gradient term:

\small
\begin{equation}\label{gradient_term}
    \nabla_\mathbf{x} V(\mathbf{x}) = \underbrace{[2 P(\mathbf{x}) + \mathbf{x}^T (\nabla_\mathbf{x} P(\mathbf{x}))]}_{2 S(\mathbf{x})} \mathbf{x}
\end{equation}
\normalsize
and solving for $\mathbf{u}$ by setting the derivative of \eqref{HJB_expanded} w.r.t. $\mathbf{u}$ to zero, we arrive at the optimal state feedback law:

\small
\begin{subequations}\label{SDRE_augmented}
    \begin{equation}\label{SDRE_augmented_law}
        \mathbf{u}^* = - R^{-1} B^T S \mathbf{x}
    \end{equation}
    \begin{equation}\label{SDRE_augmented_condition}
        \mathbf{x}^T[A^T S + S^T A - S^T B R^{-1} B^T S + Q]\mathbf{x} = 0.
    \end{equation}
\end{subequations}
\normalsize
This equation is inspired by the \ac{SDRE} \cite{cimen_SDRE}, which follows from inserting approximation $\nabla_\mathbf{x} V(\mathbf{x}) = 2 P(\mathbf{x}) \mathbf{x}$ in \eqref{HJB_expanded}. The \ac{SDRE} is then given by replacing the non-symmetric matrix $S(\mathbf{x})$ in \eqref{SDRE_augmented} by the symmetric matrix $P(\mathbf{x})$, allowing standard Riccati solvers can be used to compute $P(\mathbf{x})$. This approximation leads to the lack of global stability properties for \ac{SDRE} control laws as the additional term $\mathbf{x}^T (\nabla_\mathbf{x} P(\mathbf{x})) \mathbf{x}$ affects the behavior of $\frac{d}{dt} V(\mathbf{x})$ far from the origin $\mathbf{x}=0$ \cite{cimen_SDRE}. At this point, we emphasize that the global satisfaction of \eqref{SDRE_augmented_condition} implies global stability of the equilibrium $\mathbf{x}=0$ when applying \eqref{SDRE_augmented_law} (since it is a reformulation of the \ac{HJB}).

We now aim to formulate a scheme that allows us to jointly generate \acp{CBF} and corresponding safe controllers from value functions given in the form of \eqref{SDRE_augmented}. Therefore, we must replace the stage cost terms $\mathbf{x}^T Q(\mathbf{x}) \mathbf{x} + \mathbf{u}^T R(\mathbf{x}) \mathbf{u}$ in the optimal control formulation by their \ac{CBF} counterparts, e.g. the extended class $\mathcal{K}_\infty$ function $\alpha$.
We now define a candidate \ac{CBF} using the value function $V(\mathbf{x})$ as:

\small
\begin{equation}\label{CBF_parametrisation}
    h(\mathbf{x}) = 1 - V(\mathbf{x})
                  = 1 - \mathbf{x}^T P(\mathbf{x}) \mathbf{x}.
\end{equation}
\normalsize
For any $P(\mathbf{x})$ which satisfies \eqref{SDRE_augmented} globally on $\mathcal{X}$, $h(\mathbf{x})$ indeed has invariant superlevel sets (since $V(\mathbf{x})$ has invariant sublevel sets) for the controlled system with feedback law as in \eqref{SDRE_augmented_law}. However, any solution to \eqref{SDRE_augmented_condition} does imply optimality for some $Q(\mathbf{x}), R(\mathbf{x})$, yet the choice of these cost matrices \textit{a priori} to achieve a large safe set remains unclear. Instead, the \ac{CBF} must satisfy \eqref{CBF_lie_condition} by definition, which translates to:

\small
\begin{equation}\label{CBF_SDRE}
    \frac{d}{dt} h(\mathbf{x}) = -\mathbf{x}^T[A^T S + S^T A - 2S^T B R^{-1} B^T S]\mathbf{x} \geq -\alpha(h(\mathbf{x})).
\end{equation}
\normalsize
To bring the above equation into a matrix-valued form, we expand the right hand term to a quadratic form as:

\small
\begin{equation}\label{kappa_expansion}
    \alpha(h(\mathbf{x})) = \alpha(1-\mathbf{x}^T P(\mathbf{x}) \mathbf{x}) = \mathbf{x}^T [\frac{\alpha(h(\mathbf{x}))}{h(\mathbf{x})}(\frac{1}{||\mathbf{x}||^2} \mathbb{I} - P)] \mathbf{x}
\end{equation}
\normalsize
and reinsert into~\eqref{CBF_SDRE} to obtain the necessary condition: 

\small
\begin{equation}\label{CBF_SDRE_scalar}
    \mathbf{x}^T[A^T S + S^T A - 2S^T B R^{-1} B^T S - \frac{\alpha(h(\mathbf{x}))}{h(\mathbf{x})}(\frac{1}{||\mathbf{x}||^2} \mathbb{I} - P)]\mathbf{x} \leq 0
    \text{.}
\end{equation}
\normalsize
This equation can be interpreted as a lower bound of~\eqref{SDRE_augmented}, where the terms of the stage cost $S^T B R^{-1} B^T S + Q$ have been replaced with the expansion of $\alpha$ in \eqref{kappa_expansion}. Here, the quadratic form of $\alpha$ acts as a lower bound on the stage cost. The lower bound is explicitly given by:

\small
\begin{equation}\label{stage_cost_bound}
    \mathbf{x}^T[\frac{\alpha(h(\mathbf{x}))}{h(\mathbf{x})}(\frac{1}{||\mathbf{x}||^2} \mathbb{I} - P)]\mathbf{x} \leq \underbrace{\mathbf{x}^T Q \mathbf{x} + \mathbf{u}^{*T} R \mathbf{u}^{*}}_{l(\mathbf{x}, \mathbf{u}^*)}
    \text{.}
\end{equation}
\normalsize
Note that, unlike the \ac{SDRE} framework, the bound does allow for $Q(\mathbf{x})\prec0$ since the \ac{CBF} framework only ensures control invariance of all sublevel sets where $h(\mathbf{x}) \leq 0$.
%

So far, all equations have been expressed in a scalar form, where we only require ~\eqref{CBF_SDRE_scalar} to hold in scalar form due to the explicit dependence of the matrix $P$ on $\mathbf{x}$. Strictly speaking, only the scalar-valued Eq,~\eqref{CBF_SDRE_scalar} is required to hold globally. While being a sufficient condition, the matrix-valued equation:

\small
\begin{equation}\label{CBF_SDRE_matrix}
    A^T S + S^T A - 2S^T B R^{-1} B^T S - \frac{\alpha(h(\mathbf{x}))}{h(\mathbf{x})}(\frac{1}{||\mathbf{x}||^2} \mathbb{I} - P) \leq 0
\end{equation}
\normalsize
is a stronger requirement than that the invariance condition \eqref{CBF_lie_condition} poses. 
However, we observe that these stronger conditions show favorable properties in terms of training stability and generalizability when jointly learning a control policy and a corresponding \ac{CBF}. The reason for this is that the scalar condition \eqref{CBF_lie_condition} itself is not informative enough (as a loss function) to jointly learn safe control policies and \acp{CBF} for higher order systems by pure sampling. For higher-order systems, \eqref{CBF_lie_condition} only provides information along $\nabla_\mathbf{x} h(\mathbf{x})$, which leads to local minima in training. Therefore, previous works use first-order systems \cite{dawson2022learning} or resort to using a stable reference controller to guide the training process away from local minima \cite{dawson2022safe}. We overcome this issue by imposing the stricter condition \eqref{CBF_SDRE_matrix}. 
This is generally more restrictive than the scalar form \eqref{CBF_SDRE_scalar} due to the gradient coupling terms in $S$ but provides richer information for learning the matrix $P$ from data. It is important to emphasize that~\eqref{CBF_SDRE_matrix} is always satisfied for some $\alpha$ in the trivial case where $P$ and $R$ are independent of $\mathbf{x}$, assuming,~\eqref{SDRE_augmented} is satisfied for some $Q$ and $R$.
We will now make use of \eqref{CBF_SDRE_matrix} to learn $P(\mathbf{x})$ and $R(\mathbf{x})$ from sampled data.

\subsection{Training the Networks}
To create a single multi-objective cost function to train the networks depicted in Figure~\ref{fig:network}, we follow an approach similar to \cite{dawson2022safe}, taking the weighted average of the cost functions corresponding to the requirements \eqref{constraints}. However, we deviate from \cite{dawson2022safe} by not using any prior safe states and stable controller but insert a condition enforcing satisfaction of \eqref{CBF_SDRE_matrix} to jointly learn \ac{CBF} and safe controller.
The total cost of a minibatch of samples $(\mathbf{o},\mathbf{x})^j$ of size $b$ is given by:

\small
\begin{subequations}\label{cost_fun}
\begin{align}
    J = \lambda_1 &\sum_{(\mathbf{o},\mathbf{x})^j\in (\mathcal{O},\mathcal{X})_\text{obs}} \text{ReLU}(h(\mathbf{o}^j,\mathbf{x}^j)+\epsilon_1) \quad + \label{cost_fun_1}\\
        \lambda_2 &\sum_{(\mathbf{o},\mathbf{x})^j\in (\mathcal{O},\mathcal{X})_\text{safe}} (1-h(\mathbf{o}^j,\mathbf{x}^j)) \quad + \label{cost_fun_2}\\
        \lambda_3 &\sum_{(\mathbf{o},\mathbf{x})^j\in (\mathcal{O},\mathcal{X})}  \| \mathbf{u}(\mathbf{o}^j,\mathbf{x}^j) - \text{proj}_{\mathcal{U}}(\mathbf{u}(\mathbf{o}^j,\mathbf{x}^j)) \| \quad + \label{cost_fun_3}\\
        \lambda_4 &\sum_{(\mathbf{o},\mathbf{x})^j\in (\mathcal{O},\mathcal{X})} \sum_{i=1}^n \text{ReLU} ((\text{eig}(K(\mathbf{o}^j,\mathbf{x}^j))^i + \epsilon_2)), \label{cost_fun_4}
\end{align}\\
\end{subequations}
\normalsize
where $\lambda_{1-4}\geq0$ are weighting parameters and $\epsilon_{1,2}\geq0$ are slack parameters to enforce stricter satisfaction and generalization of the requirements in \eqref{constraints}. The first term~\eqref{cost_fun_1} penalizes the intersection of the safe set $(\mathcal{O},\mathcal{X})_\text{safe}$ with the obstacle set $(\mathcal{O},\mathcal{X})_\text{obs}$, while~\eqref{cost_fun_2} infers expansion of the safe set, and~\eqref{cost_fun_3} penalizes violations of requirement 2.
In~\eqref{cost_fun_4}, $K$ corresponds to the symmetric left-hand side of \eqref{CBF_SDRE_matrix} and penalizes violations of \eqref{CBF_SDRE_matrix}, enforcing the requirement of control invariance. Further, we add regularization to the term $\frac{d}{dt}P(\mathbf{x})$. The cost $J$ in \eqref{cost_fun} is then applied to a batch of training samples $(\mathbf{o}, \mathbf{x})^j$ where $\mathbf{o}^j$ is drawn from a distribution of training observations and $\mathbf{x}^j$ is randomly sampled from $\mathcal{X}$.

To improve the satisfaction of requirement \eqref{obstacle_contraint}, an additional loss:

\small
\begin{equation}\label{obstacle_loss}
    J_\text{obs} = \lambda_5 \sum_{(\mathbf{o},\mathbf{x})^j\in \delta(\mathcal{O},\mathcal{X})_\text{obs}} \text{ReLU}(h(\mathbf{o}^j,\mathbf{x}^j)+\epsilon_1)
\end{equation}
\normalsize
is computed over a set of ``boundary samples'' $(\mathbf{o},\mathbf{x})^j \in \delta (\mathcal{O},\mathcal{X})_\text{obs}$ is added to \eqref{cost_fun}, where $\delta (\mathcal{O},\mathcal{X})_\text{obs}$ denotes the boundary of $(\mathcal{O},\mathcal{X})_\text{obs}$ (\eg, states on the obstacle boundary).

\section{Implementation for Quadrotor Navigation}\label{sec:quadrotor}
We apply the proposed methodology for the case of the planar dynamics of a quadrotor aerial robot with state variable $\mathbf{x}$ receiving observations $\mathbf{o}$ of its environment from an onboard LiDAR. The observation $\mathbf{o}$ is considered to be a vector of $N$ range measurements in the horizontal plane. Neglecting the attitude dynamics, the state of the system is the linear position and velocity  $\mathbf{x} = [x, y, v_x, v_y]^T$ and the input is the linear acceleration $\mathbf{u} = [a_x, a_y]^T$, yielding the linear system dynamics:

\scriptsize
\begin{equation}\label{quadrotor}
\frac{d}{dt}
    \begin{bmatrix}
        x \\
        y \\
        v_x \\
        v_y 
    \end{bmatrix} = 
    \underbrace{
    \begin{bmatrix}
        0 & 0 & 1 & 0 \\
        0 & 0 & 0 & 1 \\
        0 & 0 & 0 & 0 \\
        0 & 0 & 0 & 0 
    \end{bmatrix}
    }_{A(\mathbf{x})}
    \begin{bmatrix}
        x \\
        y \\
        v_x \\
        v_y 
    \end{bmatrix} +
    \underbrace{
    \begin{bmatrix}
        0 & 0 \\
        0 & 0 \\
        1 & 0 \\
        0 & 1 
    \end{bmatrix}
    }_{B(\mathbf{x})}
    \begin{bmatrix}
        a_x \\
        a_y 
    \end{bmatrix} \text{.}
\end{equation}
\normalsize
Further, we assume an input constraint $||\mathbf{u}|| \leq \SI{2}{\metre\per\second^2}$. The observation $\mathbf{o}$ is considered to be a vector of $N=32$ equally-spaced range measurements in the horizontal plane. 
\begin{figure}
    \centering
    \includegraphics[width=0.95\columnwidth]
    {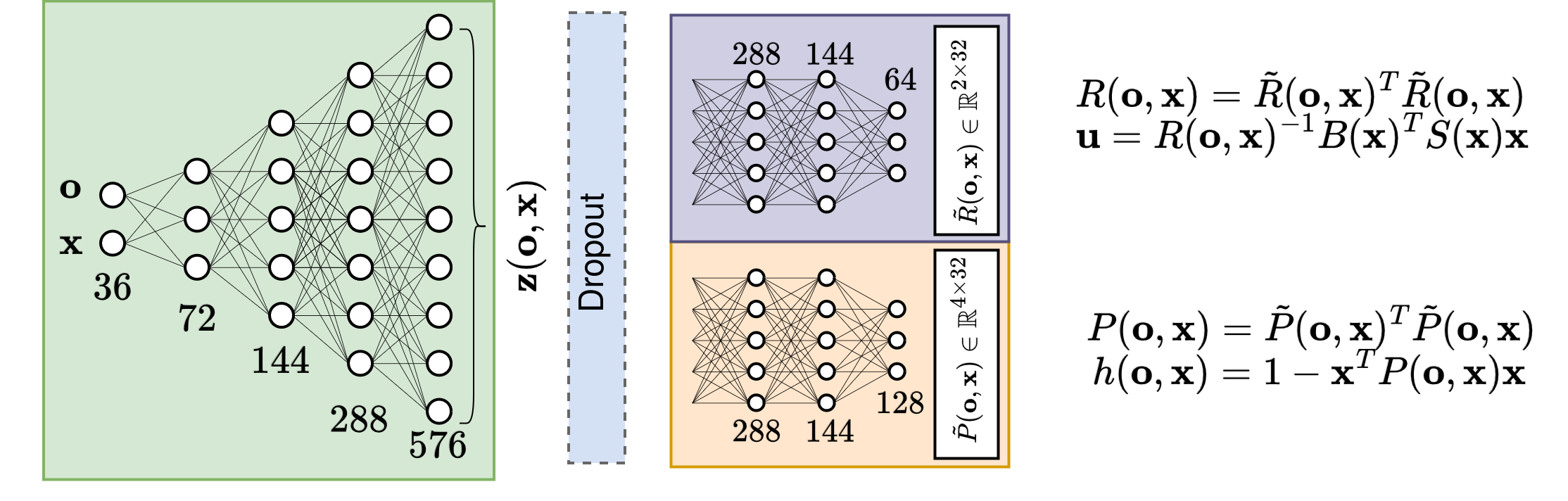}
    \vspace{-1ex}
    \caption{Network structure and dimensions for quadrotor navigation on $xy$. The output of the \ac{CBF} and controller network consist of the sum of 32 ``feature matrices'' each to allow superposition of features in the final output.}
    \label{fig:network2}
    \vspace{-3ex}
\end{figure}

\subsection{Dataset}
To train the neural \ac{CBF} to predict safe sets for discontinuous environments with varying obstacle density, a dataset of $10^4$ simulated training observations are collected from a randomized environment with varying obstacle placement and density in the Aerial Gym Simulator~\cite{kulkarni2023aerial}. To sample the state space $\mathcal{X}$, positions around a robot-aligned frame are sampled uniformly in bearing and distance, while velocities in both axes are sampled uniformly from the interval $[-v_\text{max}, v_\text{max}]$. Information whether a sample $(\mathbf{o}, \mathbf{x})^j$ belongs to $\mathcal{X}_\text{obs}$ is computed by projecting the position of $\mathbf{x}^j$ and checking collisions with $\mathbf{o}^j$. Furthermore, it is assumed $\mathbf{x} \in \mathcal{X}_\text{obs}$ if the position falls outside a $4~\textrm{m}$ radius. To create samples on the obstacle boundaries for evaluating \eqref{obstacle_loss}, raycasting with collision checking in $\mathbf{o}^j$ is used. Note that during training, for each observation $\mathbf{o}^j$ a new state $\mathbf{x}^j$ is sampled.

\subsection{Training}
For all the results presented in this work (simulation and experiments), a single set of weights for the network shown in Fig. \ref{fig:network2} is used. We use ELU activations for all hidden layers and apply a dropout with $p=0.1$ to the latent variable. The constraints $P(\mathbf{o}, \mathbf{x}) \succ 0$ and $R(\mathbf{o}, \mathbf{x}) \succ 0$ are always satisfied as $P(\mathbf{o}, \mathbf{x})$ and $R(\mathbf{o}, \mathbf{x})$ are computed as the inner product of feature matrices $\tilde{P}(\mathbf{o}, \mathbf{x})$ and $\tilde{R}(\mathbf{o}, \mathbf{x})$ with itself, respectively. The gradients $\nabla_\mathbf{x} h (\mathbf{o}, \mathbf{x})$ and $\nabla_\mathbf{x} P (\mathbf{o}, \mathbf{x})$ needed for \eqref{observation_CBF_lie} and \eqref{gradient_term} are easily computed via the autograd function in PyTorch\footnote{\url{https://pytorch.org/}} during training and at runtime. During a training cycle, a batch $\{(\mathbf{o}, \mathbf{x})^j\}_{j=1}^b$ of size $b$ is passed through all networks to compute the loss function \eqref{cost_fun}. The weights of all networks are then updated synchronously using the Adam optimizer. We train for 100 epochs with batch size $32\times128$ (32 observations, 128 states each) without additional boundary samples and then proceed to train for 250 epochs with batch size $32\times256$ with additional boundary samples, where the loss \eqref{obstacle_loss} is added to \eqref{cost_fun}. We set $\alpha(h)$ to

\small
\begin{equation}
\alpha(h) = 
\begin{cases}
    2\cdot h,& \text{if } h \geq 0\\
    \frac{1}{0.5 + |h|}             & \text{otherwise}
\end{cases}
,
\end{equation}
\normalsize
where $\alpha$ is selected based on desired behavior of the filter and must admit a solution to \eqref{CBF_SDRE_matrix} given the input constraints.
The evaluation statistics shown in Table \ref{training_eval} show the resulting high degree of satisfaction of requirements \eqref{constraints} on a holdout dataset of $10^3$ observations with $256$ random states each.

\begin{table}
    \centering
    \caption{Evaluation Statistics}
    \label{training_eval}
    \begin{tabular}{ |p{6.25cm}||p{1.25cm}|  }
     \hline
     Constraint from \eqref{constraints} &  violation rate [\%]\\
     \hline
     $h(\mathbf{o},\mathbf{x}) \leq 0 \quad  \forall (\mathbf{o},\mathbf{x}) \in (\mathcal{O}, \mathcal{X})_\text{obs}$&  0.084\\
     $\mathbf{u}(\mathbf{o},\mathbf{x}) \in \mathcal{U}  \quad \forall (\mathbf{o},\mathbf{x}) \in (\mathcal{O}, \mathcal{X})$ &  0.321\\
     $\frac{d}{dt} h(\mathbf{o},\mathbf{x}) \geq - \alpha(h(\mathbf{o},\mathbf{x}))  \forall (\mathbf{o},\mathbf{x}) \in (\mathcal{O}, \mathcal{X})$ & 1.133\\
     \hline
    \end{tabular}
    \vspace{-4ex}
\end{table}

\subsection{Recursive Filtering}
To apply the trained \acp{CBF} in a safety filter, for every new observation, we first apply the observation update \eqref{switching_update} and then employ condition \eqref{observation_CBF_lie} in a QP-based safety filter \eqref{safety_QP}.
In the case that $\mathbf{o}_\text{safe}$ and $\xi_\text{safe}$ are not updated in \eqref{switching_update}, an estimate of $\xi_\text{safe}$ is obtained by integrating the velocity since the last update.
This ``dead-reckoning'' approach is justifiable over short time horizons and eliminates the need for a full pose estimate inside the safety filter. To avoid dead-reckoning for too long, we limit its horizon to $\eta$ and penalize it in the optimization problem \eqref{constraint_satisfaction_slack_vars}. Accounting for the robot dimensions, we evaluate $h(\mathbf{o},\mathbf{x})$ for four states $\mathbf{x}_j$ on a bounding box around the robot and solve the (soft constrained) optimization problem with the slack variable $\delta$ and the slack parameter $\lambda_s \gg 0$:

\small
\begin{equation}\label{constraint_satisfaction_slack_vars}
    \begin{aligned}
    \mathbf{u}^*  &= \quad \underset{\mathbf{u} \in \mathcal{U}}{\text{argmin} } \quad  ||\mathbf{u} - \mathbf{u}_\text{ref}||^2  + \lambda_s\delta - \lambda_g \mathbf{g} \cdot \mathbf{u}\\
         \textrm{s.t.} & \quad  \nabla_{\mathbf{x}_j} h(\mathbf{o},\mathbf{x}_j) \cdot [f(\mathbf{x}_j) + B(\mathbf{x}_j)\mathbf{u}]  + \\
        & \quad \alpha(h(\mathbf{o},\mathbf{x}_j)) \geq - \delta \quad \forall j \in \{1,2,3,4\} \\
        & \quad \delta \geq 0\\
    \end{aligned}
\end{equation}
\normalsize
to enforce constraint satisfaction for the points $\mathbf{x}_j$. The term $\lambda_g$ is a tuning parameter, and $\mathbf{g}=\sum_j h(\mathbf{o}_{i},\mathbf{x}_j)$ during dead reckoning, and $\mathbf{g}=0$ otherwise.

\section{Evaluation Studies}\label{sec:xp}

\subsubsection{Aerial Robot}

\begin{figure*}[hbt!]
    \centering
    \includegraphics[width=0.95\textwidth]{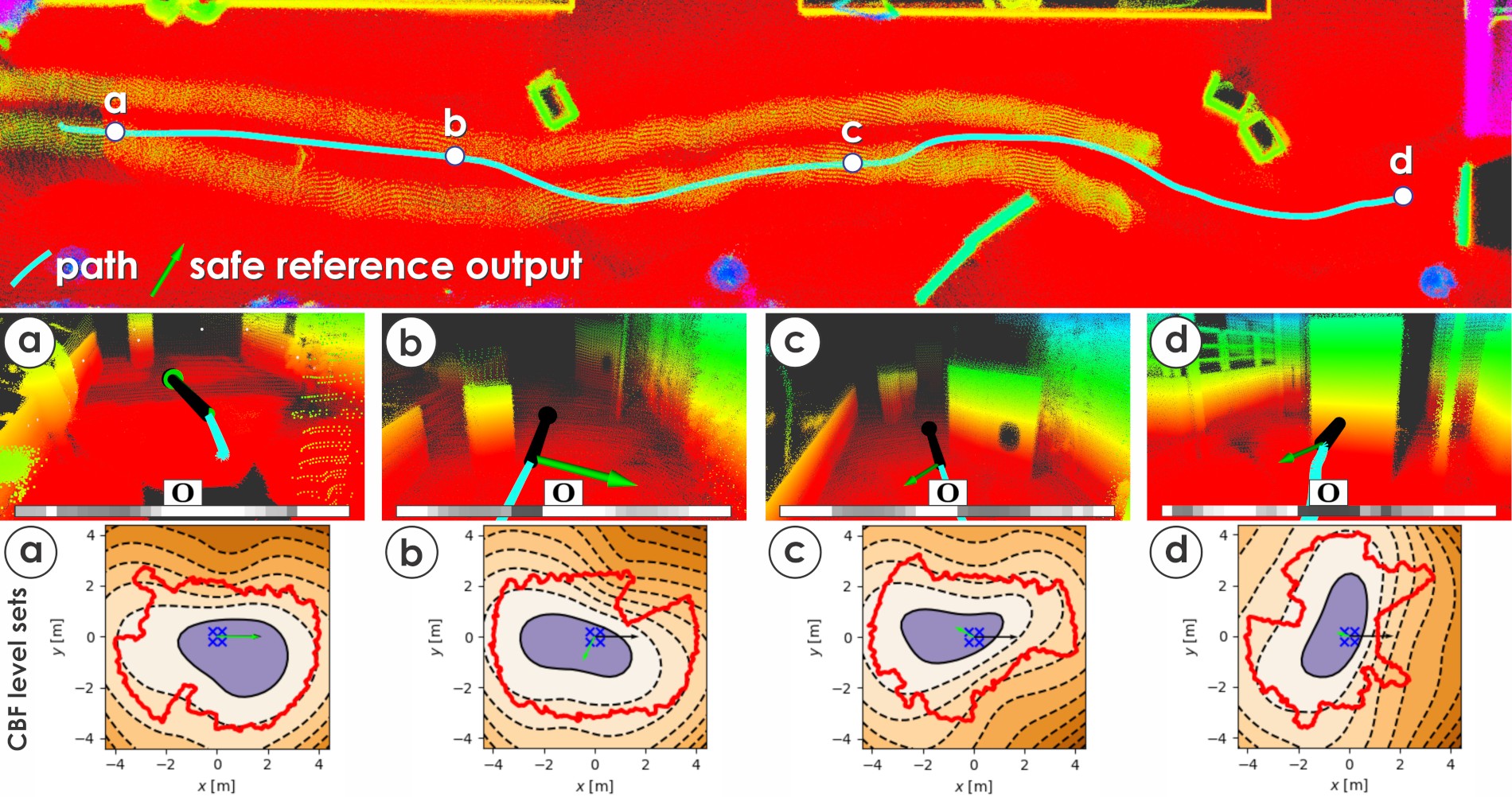}  
    \vspace{-1.5ex}
    \caption{Experimental evaluation of the proposed safety filter in a corridor. The safety filter receives a constant acceleration input of $\SI{2}{\metre\per\second^2}$ in the vehicle frame (black arrow) and produces a safe reference output (green arrow). The safety filter passes through safe reference inputs (a), deflects the multirotor when passing obstacles (b\&c) and brings the robot to a full stop in front of the final panel (d). The observation (range image) at each instance is shown. \ac{CBF} level sets are drawn for all instances (a-d) at the current velocity in the vehicle frame where purple denotes $h\geq0$ and red denotes the obstacle boundary. The position of points $\mathbf{x}_j$ is shown as blue crosses.}
    \label{fig:experiment_1}
    \vspace{-3ex}
\end{figure*}

The conducted experiments relied on a custom-made quadrotor with dimensions $0.52\times 0.52\times0.31~\textrm{m}$ and a mass of $2.58~\textrm{kg}$. The system integrates PX4-based autopilot avionics for low-level control, alongside an NVIDIA Orin NX compute board, an Ouster OS0-64 LiDAR and a VectorNav VN-100 IMU. We use LiDAR localization as in~\cite{khattak2020complementary} to estimate the odometry of the robot. The input LiDAR data (20~\textrm{Hz}) of the horizontal plane is limited to a range of $4~\textrm{m}$ and binned angle-wise to create an observation vector $\mathbf{o}$ as in training ($N=32$). In the following, $\eta$ is set to $3$. The inference time of the neural network including gradient computation is $4.4$ ms. Critically, it is underlined again that the proposed approach does not utilize a reconstructed map but only the instantaneous LiDAR scan. The robot's nominal policy for the purposes of testing are commanded horizontal accelerations in the vehicle frame while yaw is regulated against the inertial frame. The proposed safety filter alters the commanded acceleration to ensure safety.

\subsection{Simulation Study}

\begin{table}
    \caption{Success rate in collision avoidance}
    \label{simulation_table}    
    \centering
    \begin{tabular}{ |p{1.5cm}||p{1.5cm}||p{1.0cm}||p{1.0cm}||p{1.0cm}|  }
     \hline
      Noise level & time delay &  $p=3$ & $p=5$ & $p=10$\\
     \hline
     $\sigma=0.0~\textrm{m}$ & $\tau=0.0~\textrm{s}$ & 97.0 & 97.2 & 99.7\\
     \hline
     $\sigma=0.02~\textrm{m}$ &$\tau=0.0~\textrm{s}$ & 97.8 & 99.2 & 99.7\\
     \hline
     $\sigma=0.02~\textrm{m}$ &$\tau=0.1~\textrm{s}$ & 95.0 & 97.0 & 98.3\\
     \hline
    \end{tabular}
    \vspace{-4ex}
\end{table}

We evaluate the proposed safety filter in a randomized simulation study in the Aerial Gym Simulator \cite{kulkarni2023aerial}. An enclosed environment of $20\times10~\textrm{m}$ is constructed, bounded by 4 walls and a number of $p$ vertical pillars of radii $0.75~\textrm{m}$ to $1.0~\textrm{m}$ are placed randomly. We simulate the quadrotor described previously with the simplified dynamics model \eqref{quadrotor} for $10^3$ rollouts. During each rollout, the robot is spawned at one end of the environment and receives a constant, unsafe reference input $\mathbf{u}_\text{ref}=[ \SI{2}{\metre\per\second^2}, \SI{0}{\metre\per\second^2}]^T$, driving it towards the pillars or the walls. The epoch terminates after $10$~\textrm{s} or on collision. Without the intervention of the safety filter, the robot is guaranteed to collide during every rollout. We add artificial Gaussian noise with std $\sigma$ to the LiDAR image before binning and a time delay $\tau$ to the system \eqref{quadrotor} to account for the attitude dynamics.
The ablated success rates (\eg, no collisions) over obstacle densities are shown in Table \ref{simulation_table}. We note that noise on the LiDAR image improves the success rate, as we take the minimum of each bin.

\subsection{Experimental Studies}
\begin{figure}
    \centering
    \includegraphics[width=0.95\columnwidth]{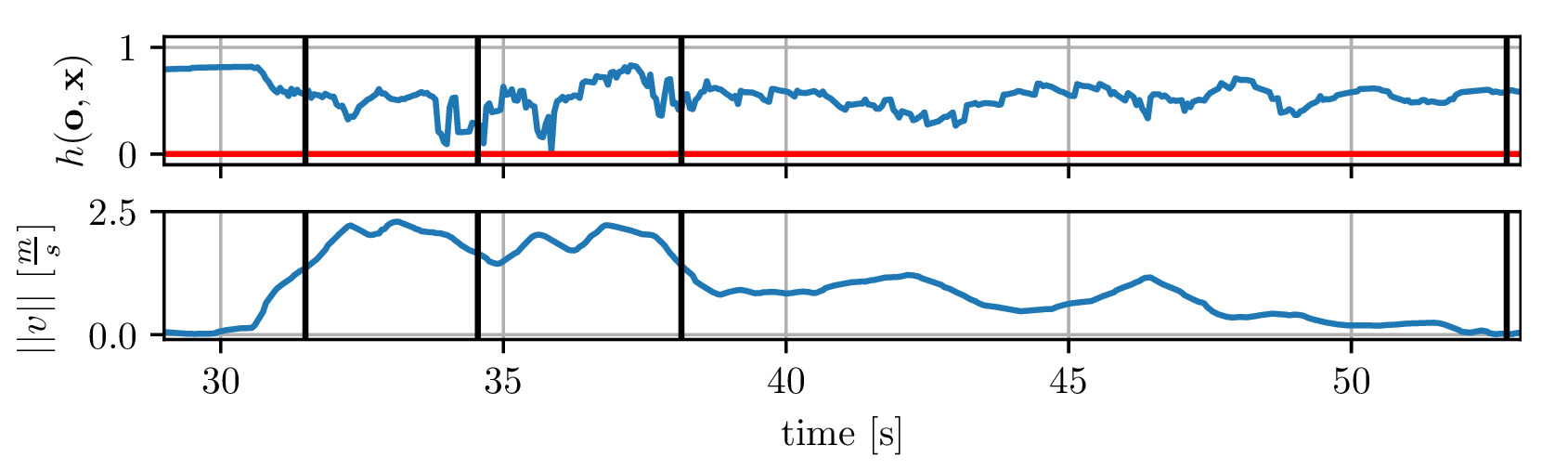}
    \vspace{-2ex}
    \caption{Value of $h(\mathbf{o},\mathbf{x})$ during the corridor experiment (top) and ${\scriptscriptstyle ||v||=\sqrt{v_x^2+v_y^2}}$ (bottom). The times of instances (a-d) are marked as black lines. $h(\mathbf{o},\mathbf{x})$ is not equal to $1$ at rest since it is evaluated at $\mathbf{x}_j$.}
    \label{fig:cbf_val1}
    \vspace{-4ex}
\end{figure}
For the experimental study, two hardware experiments were carried out in indoor and outdoor environments. In both cases, the safety filter operates at a rate of $20~\textrm{Hz}$. In the first experiment, the robot's nominal policy is a constant commanded body-frame acceleration of $\mathbf{u}_\text{ref}=[ \SI{2}{\metre\per\second^2}, \SI{0}{\metre\per\second^2}]^T$. In an obstacle-filled hallway, the proposed safety filter modifies $\mathbf{u}_\text{ref}$ to deflect the robot away from obstacles and brings the robot to a full stop in front of the final obstacle, avoiding collisions despite the unsafe reference input along the way. A summary of the experiment is shown in Fig. \ref{fig:experiment_1}. Considering the value of $h(\mathbf{o},\mathbf{x})$ in Fig. \ref{fig:cbf_val1} , we see that the proposed safety filter manages to keep the robot within $\mathcal{X}_\text{safe}$ over a wide range of velocities during the entire experiment. 

The second experiment is conducted in a forest environment. Here, $\mathbf{u}_\text{ref}$ is given by the manual input of a human operator, intentionally trying to guide the robot into obstacles like tree trunks and foliage. A short sequence of the  experiment is briefly shown in Fig. \ref{fig:experiment_2}. The proposed safety filter successfully avoids collision by decelerating the robot and deflecting it away from obstacles.

\begin{figure}
    \centering
    \includegraphics[width=0.99\columnwidth]{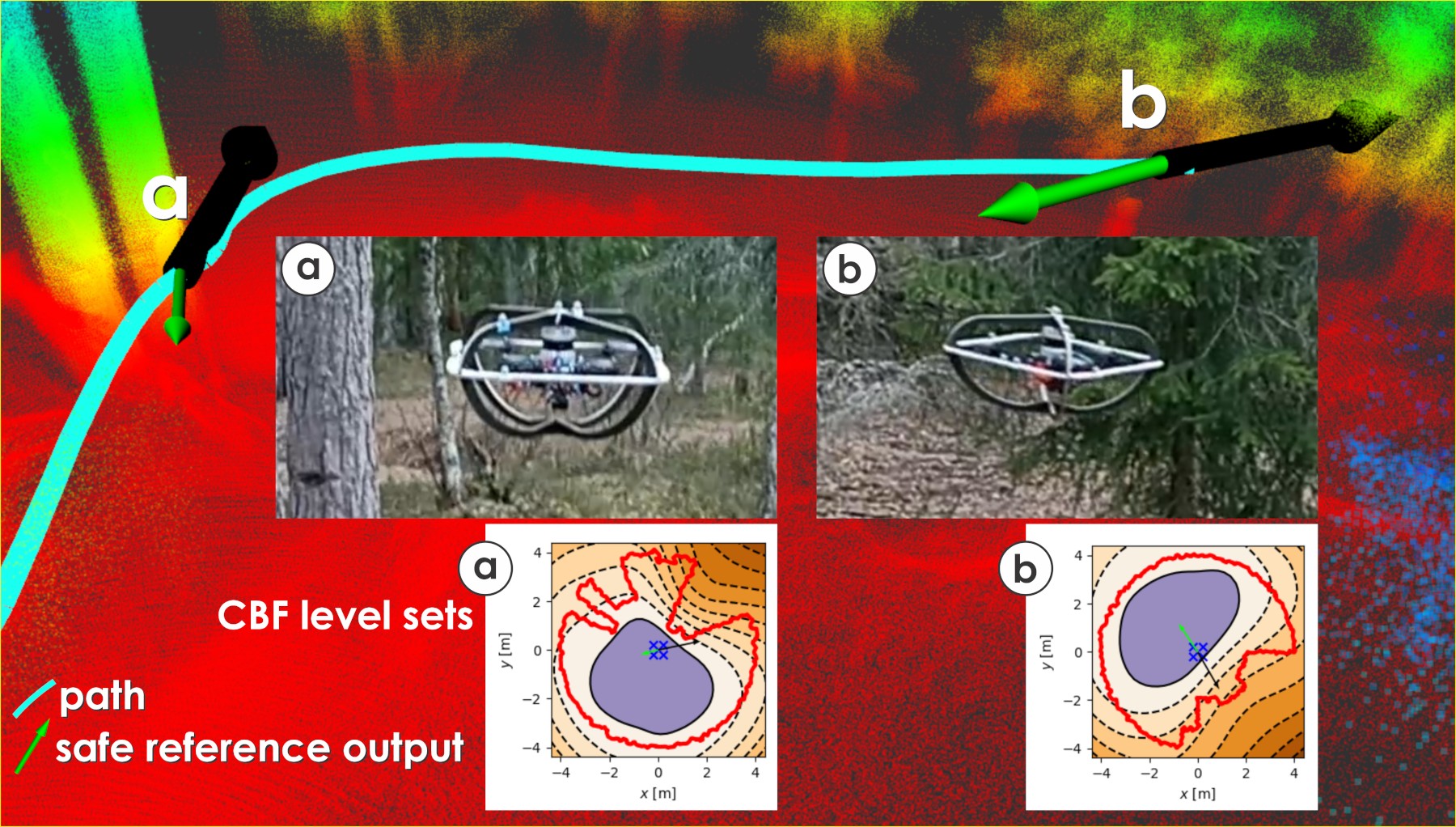} 
    \vspace{-2ex}
    \caption{Experimental evaluation in a forest environment with an adversarial human operator as a reference controller. At instance (a), the human operator tries to crash the multirotor into a tree trunk, while at (b), the obstacle is a spruce. The proposed safety filter counteracts the humans actions, avoiding collision. \ac{CBF} level sets are shown in purple for the current velocity.}
    \label{fig:experiment_2}
    \vspace{-4ex}
\end{figure}

\section{Conclusions}\label{sec:concl}

This work presented a new approach to using learned \acp{CBF} for map-less safe navigation of aerial robots in unknown environments.
The \ac{CBF} is learned without any expert controller, using an adaptation of the \ac{SDRE}. Deep neural networks are trained to represent the \ac{CBF} using only range observations, and the state of the robot.
The learned \ac{CBF} is utilized in a reactive safety filter that enforces collision avoidance. Training is performed relying purely on simulated data. Randomized simulation studies and real-world experiments demonstrate the applicability of the proposed method in ensuring the safety of the platform.

\bibliographystyle{IEEEtran}
\bibliography{BIB/main}

\begin{thebibliography}{10}
\providecommand{\url}[1]{#1}
\csname url@samestyle\endcsname
\providecommand{\newblock}{\relax}
\providecommand{\bibinfo}[2]{#2}
\providecommand{\BIBentrySTDinterwordspacing}{\spaceskip=0pt\relax}
\providecommand{\BIBentryALTinterwordstretchfactor}{4}
\providecommand{\BIBentryALTinterwordspacing}{\spaceskip=\fontdimen2\font plus
\BIBentryALTinterwordstretchfactor\fontdimen3\font minus \fontdimen4\font\relax}
\providecommand{\BIBforeignlanguage}[2]{{%
\expandafter\ifx\csname l@#1\endcsname\relax
\typeout{** WARNING: IEEEtran.bst: No hyphenation pattern has been}%
\typeout{** loaded for the language `#1'. Using the pattern for}%
\typeout{** the default language instead.}%
\else
\language=\csname l@#1\endcsname
\fi
#2}}
\providecommand{\BIBdecl}{\relax}
\BIBdecl

\bibitem{CerberusScience}
M.~Tranzatto \emph{et~al.}, ``Cerberus in the darpa subterranean challenge,'' \emph{Science Robotics}, vol.~7, no.~66, p. eabp9742, 2022.

\bibitem{valavanis2015handbook}
K.~P. Valavanis \emph{et~al.}, \emph{Handbook of unmanned aerial vehicles}.\hskip 1em plus 0.5em minus 0.4em\relax Springer, 2015, vol.~1.

\bibitem{loquercio2021learning}
A.~Loquercio \emph{et~al.}, ``Learning high-speed flight in the wild,'' \emph{Science Robotics}, vol.~6, no.~59, p. eabg5810, 2021.

\bibitem{florence2020integrated}
P.~Florence \emph{et~al.}, ``Integrated perception and control at high speed: Evaluating collision avoidance maneuvers without maps,'' in \emph{Algorithmic Foundations of Robotics XII: Proceedings of the Twelfth Workshop on the Algorithmic Foundations of Robotics}.\hskip 1em plus 0.5em minus 0.4em\relax Springer, 2020, pp. 304--319.

\bibitem{nguyen2023uncertainty}
H.~Nguyen \emph{et~al.}, ``Uncertainty-aware visually-attentive navigation using deep neural networks,'' \emph{The International Journal of Robotics Research}, 2023.

\bibitem{jacquet2024n}
M.~Jacquet \emph{et~al.}, ``N-mpc for deep neural network-based collision avoidance exploiting depth images,'' \emph{arXiv preprint arXiv:2402.13038}, 2024.

\bibitem{kulkarni2024reinforcement}
M.~Kulkarni \emph{et~al.}, ``Reinforcement learning for collision-free flight exploiting deep collision encoding,'' \emph{arXiv preprint arXiv:2402.03947}, 2024.

\bibitem{wabersich2023data}
K.~P. Wabersich \emph{et~al.}, ``Data-driven safety filters: Hamilton-jacobi reachability, control barrier functions, and predictive methods for uncertain systems,'' \emph{IEEE Control Systems Magazine}, vol.~43, no.~5, pp. 137--177, 2023.

\bibitem{WABERSICH2021109597}
\BIBentryALTinterwordspacing
------, ``A predictive safety filter for learning-based control of constrained nonlinear dynamical systems,'' \emph{Automatica}, vol. 129, p. 109597, 2021. [Online]. Available: \url{https://www.sciencedirect.com/science/article/pii/S0005109821001175}
\BIBentrySTDinterwordspacing

\bibitem{ames2016control}
A.~D. Ames \emph{et~al.}, ``Control barrier function based quadratic programs for safety critical systems,'' \emph{IEEE Transactions on Automatic Control}, vol.~62, no.~8, pp. 3861--3876, 2016.

\bibitem{ames2019control}
------, ``Control barrier functions: Theory and applications,'' in \emph{2019 18th European control conference (ECC)}.\hskip 1em plus 0.5em minus 0.4em\relax IEEE, 2019, pp. 3420--3431.

\bibitem{ferraguti2022safety}
F.~Ferraguti \emph{et~al.}, ``Safety and efficiency in robotics: the control barrier functions approach,'' \emph{IEEE Robotics \& Automation Magazine}, vol.~29, no.~3, pp. 139--151, 2022.

\bibitem{unlu2023control}
H.~U. Unlu \emph{et~al.}, ``Control barrier functions and {LiDAR}-inertial odometry for safe drone navigation in {GNSS}-denied environments,'' in \emph{Motion Planning for Dynamic Agents}.\hskip 1em plus 0.5em minus 0.4em\relax IntechOpen, 2023.

\bibitem{dawson2023safe}
C.~Dawson \emph{et~al.}, ``Safe control with learned certificates: A survey of neural lyapunov, barrier, and contraction methods for robotics and control,'' \emph{IEEE Transactions on Robotics}, 2023.

\bibitem{dawson2022learning}
------, ``Learning safe, generalizable perception-based hybrid control with certificates,'' \emph{IEEE Robotics and Automation Letters}, vol.~7, no.~2, pp. 1904--1911, 2022.

\bibitem{keyumarsi2023lidar}
S.~Keyumarsi \emph{et~al.}, ``{LiDAR}-based online control barrier function synthesis for safe navigation in unknown environments,'' \emph{IEEE Robotics and Automation Letters}, 2023.

\bibitem{dawson2022safe}
C.~Dawson \emph{et~al.}, ``Safe nonlinear control using robust neural lyapunov-barrier functions,'' in \emph{Conference on Robot Learning}.\hskip 1em plus 0.5em minus 0.4em\relax PMLR, 2022, pp. 1724--1735.

\bibitem{singletary2021comparative}
A.~Singletary \emph{et~al.}, ``Comparative analysis of control barrier functions and artificial potential fields for obstacle avoidance,'' in \emph{2021 IEEE/RSJ International Conference on Intelligent Robots and Systems (IROS)}.\hskip 1em plus 0.5em minus 0.4em\relax IEEE, 2021, pp. 8129--8136.

\bibitem{jian2023dynamic}
Z.~Jian \emph{et~al.}, ``Dynamic control barrier function-based model predictive control to safety-critical obstacle-avoidance of mobile robot,'' in \emph{2023 IEEE International Conference on Robotics and Automation (ICRA)}.\hskip 1em plus 0.5em minus 0.4em\relax IEEE, 2023, pp. 3679--3685.

\bibitem{jang2024safe}
I.~Jang \emph{et~al.}, ``Safe control for navigation in cluttered space using multiple lyapunov-based control barrier functions,'' \emph{IEEE Robotics and Automation Letters}, 2024.

\bibitem{zhou2024control}
S.~Zhou \emph{et~al.}, ``Control-barrier-aided teleoperation with visual-inertial slam for safe mav navigation in complex environments,'' \emph{arXiv preprint arXiv:2403.04331}, 2024.

\bibitem{cheng2019end}
R.~Cheng \emph{et~al.}, ``End-to-end safe reinforcement learning through barrier functions for safety-critical continuous control tasks,'' in \emph{Proceedings of the AAAI conference on artificial intelligence}, vol.~33, 2019, pp. 3387--3395.

\bibitem{choi2020reinforcement}
J.~Choi \emph{et~al.}, ``Reinforcement learning for safety-critical control under model uncertainty, using control lyapunov functions and control barrier functions,'' in \emph{Robotics: Science and Systems (RSS)}, 2020.

\bibitem{taylor2020learning}
A.~Taylor \emph{et~al.}, ``Learning for safety-critical control with control barrier functions,'' in \emph{Learning for Dynamics and Control}.\hskip 1em plus 0.5em minus 0.4em\relax PMLR, 2020, pp. 708--717.

\bibitem{song2022safe}
L.~Song \emph{et~al.}, ``Safe reinforcement learning for {LiDAR}-based navigation via control barrier function,'' in \emph{2022 21st IEEE International Conference on Machine Learning and Applications (ICMLA)}.\hskip 1em plus 0.5em minus 0.4em\relax IEEE, 2022, pp. 264--269.

\bibitem{xiao2023barriernet}
W.~Xiao \emph{et~al.}, ``Barriernet: Differentiable control barrier functions for learning of safe robot control,'' \emph{IEEE Transactions on Robotics}, 2023.

\bibitem{abdi2023safe}
H.~Abdi \emph{et~al.}, ``Safe control using vision-based control barrier function {(V-CBF)},'' in \emph{2023 IEEE International Conference on Robotics and Automation (ICRA)}.\hskip 1em plus 0.5em minus 0.4em\relax IEEE, 2023, pp. 782--788.

\bibitem{tong2023enforcing}
M.~Tong \emph{et~al.}, ``Enforcing safety for vision-based controllers via control barrier functions and neural radiance fields,'' in \emph{2023 IEEE International Conference on Robotics and Automation (ICRA)}.\hskip 1em plus 0.5em minus 0.4em\relax IEEE, 2023, pp. 10\,511--10\,517.

\bibitem{srinivasan2020synthesis}
M.~Srinivasan \emph{et~al.}, ``Synthesis of control barrier functions using a supervised machine learning approach,'' in \emph{2020 IEEE/RSJ International Conference on Intelligent Robots and Systems (IROS)}.\hskip 1em plus 0.5em minus 0.4em\relax IEEE, 2020, pp. 7139--7145.

\bibitem{long2021learning}
K.~Long \emph{et~al.}, ``Learning barrier functions with memory for robust safe navigation,'' \emph{IEEE Robotics and Automation Letters}, vol.~6, no.~3, pp. 4931--4938, 2021.

\bibitem{cimen_SDRE}
T.~Çimen, ``State-dependent riccati equation (sdre) control: A survey,'' \emph{IFAC Proceedings Volumes}, vol.~41, no.~2, pp. 3761--3775, 2008, 17th IFAC World Congress.

\bibitem{kulkarni2023aerial}
M.~Kulkarni \emph{et~al.}, ``Aerial gym--isaac gym simulator for aerial robots,'' \emph{arXiv preprint arXiv:2305.16510}, 2023.

\bibitem{khattak2020complementary}
S.~Khattak \emph{et~al.}, ``Complementary multi--modal sensor fusion for resilient robot pose estimation in subterranean environments,'' in \emph{2020 International Conference on Unmanned Aircraft Systems (ICUAS)}.\hskip 1em plus 0.5em minus 0.4em\relax IEEE, 2020, pp. 1024--1029.

\end{thebibliography}
\end{document}